\newtheorem{definition}{Definition}[section]
\newtheorem{theorem}{Theorem}[section]
\newtheorem{proposition}{Proposition}[section]
\title{Fast Bayesian Updates via Harmonic Representations}
\author{
	Di Zhang \\
	School of AI and Advanced Computing \\
	Xi'an Jiaotong-Liverpool University \\
	Suzhou, Jiangsu, China \\
	\texttt{di.zhang@xjtlu.edu.cn}
}
\begin{document}
	
	\maketitle
	
	\begin{abstract}
		Bayesian inference, while foundational to probabilistic reasoning, is often hampered by the computational intractability of posterior distributions, particularly through the challenging evidence integral. Conventional approaches like Markov Chain Monte Carlo (MCMC) and Variational Inference (VI) face significant scalability and efficiency limitations. This paper introduces a novel, unifying framework for fast Bayesian updates by leveraging harmonic analysis. We demonstrate that representing the prior and likelihood in a suitable orthogonal basis transforms the Bayesian update rule into a spectral convolution. Specifically, the Fourier coefficients of the posterior are shown to be the normalized convolution of the prior and likelihood coefficients. To achieve computational feasibility, we introduce a spectral truncation scheme, which, for smooth functions, yields an exceptionally accurate finite-dimensional approximation and reduces the update to a circular convolution. This formulation allows us to exploit the Fast Fourier Transform (FFT), resulting in a deterministic algorithm with $\mathcal{O}(N \log N)$ complexity—a substantial improvement over the $\mathcal{O}(N^2)$ cost of naive methods. We establish rigorous mathematical criteria for the applicability of our method, linking its efficiency to the smoothness and spectral decay of the involved distributions. The presented work offers a paradigm shift, connecting Bayesian computation to signal processing and opening avenues for real-time, sequential inference in a wide class of problems.
		
		\keywords{Bayesian Inference \and Fast Fourier Transform \and Harmonic Analysis \and Spectral Methods \and Convolution Theorem}
	\end{abstract}
	
	\section{Introduction}
	\subsection{Research Background and Motivation}
	
	The Bayesian framework provides a principled methodology for updating beliefs in the presence of uncertainty \cite{gelman2013bayesian}. Its theoretical appeal, however, is often tempered by a fundamental practical impediment: the computational intractability of the posterior distribution. For all but the most elementary models, the normalizing constant—the marginal likelihood or evidence—in Bayes' theorem involves an integral that is analytically unsolvable and numerically challenging. This constitutes the primary computational bottleneck in Bayesian inference.
	
	Contemporary practice relies heavily on two classes of algorithms to overcome this bottleneck. Markov Chain Monte Carlo (MCMC) methods approximate the posterior by generating samples from a Markov chain constructed to have the posterior as its stationary distribution \cite{neal2011mcmc,robert1999monte}. While asymptotically exact, MCMC can be prohibitively slow for complex, high-dimensional models, suffering from lengthy burn-in periods and poor mixing. Furthermore, diagnosing convergence remains a non-trivial task \cite{tierney1994markov}. Alternatively, Variational Inference (VI) recasts the problem of posterior computation as an optimization problem, seeking the best approximation from a tractable family of distributions \cite{wainwright2008graphical}. Although typically faster than MCMC, VI introduces approximation bias dictated by the expressiveness of the variational family and often requires problem-specific derivations, which can limit its generality and automation \cite{opper2009variational}. Both methodologies, in their standard forms, incur significant computational costs that scale unfavorably with model complexity and data size, restricting their application in modern, data-intensive settings \cite{ghahramani2015probabilistic}.
	
	This work proposes a paradigm shift by leveraging the tools of harmonic analysis to address this core challenge. We introduce a framework where the prior and likelihood are represented in a suitable spectral basis. The central insight is that, within this representation, the Bayesian update rule—a pointwise multiplication of functions—translates into a simple convolution of their spectral coefficients. This transformation of a functional multiplication into a linear algebraic operation opens the door to highly efficient computational strategies. By exploiting the structure of this convolution, particularly through spectral truncation and the Fast Fourier Transform (FFT) \cite{cooley1965algorithm}, we aim to achieve posterior updates with a computational complexity that is radically lower than that of mainstream iterative or sampling-based methods.

	\section{Preliminaries}
	
	This chapter establishes the foundational concepts required for the development of our framework. We begin with a formal description of the Bayesian inference problem, followed by an introduction to the key tools from harmonic analysis, and conclude with the essential properties of convolution.
	
	\subsection{The Bayesian Inference Framework}
	
	At its core, Bayesian inference provides a probabilistic mechanism for updating beliefs about an unknown parameter $\theta \in \Theta$ upon observing data $\mathcal{D}$. The process is governed by Bayes' theorem \cite{gelman2013bayesian,mackay2003information}:
	
	\begin{equation}
		p(\theta | \mathcal{D}) = \frac{p(\mathcal{D} | \theta) \, p(\theta)}{p(\mathcal{D})}.
	\end{equation}
	
	The component distributions are defined as follows:
	\begin{itemize}
		\item The \emph{prior distribution} $p(\theta)$ encodes our initial uncertainty about $\theta$ before observing data.
		\item The \emph{likelihood function} $p(\mathcal{D} | \theta)$ models the probability of observing the data $\mathcal{D}$ given a specific parameter value $\theta$.
		\item The \emph{posterior distribution} $p(\theta | \mathcal{D})$ represents our updated belief about $\theta$ after incorporating the evidence from the data $\mathcal{D}$.
		\item The \emph{evidence} or marginal likelihood $p(\mathcal{D}) = \int_\Theta p(\mathcal{D} | \theta) p(\theta) \, d\theta$ serves as the normalizing constant ensuring the posterior is a valid probability density.
	\end{itemize}
	
	The central computational challenge in Bayesian inference stems from the evaluation of the posterior. For complex models, the evidence integral is typically intractable, rendering the direct computation of $p(\theta | \mathcal{D})$ impossible. Our work addresses this challenge by transforming the problem into a spectral representation where the required operations become tractable.
	
	\subsection{Foundations of Harmonic Analysis}
	
	Our methodology rests on representing probability density functions as elements in a well-chosen function space. The natural setting for this is a Hilbert space \cite{boyd2001chebyshev,trefethen2000spectral}.
	
	\begin{definition}
		A Hilbert space $\mathcal{H}$ is a complete inner product space. We consider the space of square-integrable functions, $L^2(\Omega)$, defined on a domain $\Omega$, with the inner product $\langle f, g \rangle = \int_\Omega f(x)\overline{g(x)} \, dx$ and norm $\|f\|_2 = \sqrt{\langle f, f \rangle}$.
	\end{definition}
	
	A powerful approach to manipulating functions in a Hilbert space is to project them onto an orthonormal basis.
	
	\begin{definition}
		A sequence $\{\phi_k\}_{k=-\infty}^{\infty}$ in $\mathcal{H}$ is an orthonormal basis if $\langle \phi_m, \phi_n \rangle = \delta_{mn}$ and every $f \in \mathcal{H}$ can be uniquely represented as $f = \sum_{k=-\infty}^\infty a_k \phi_k$, where the coefficients are given by $a_k = \langle f, \phi_k \rangle$.
	\end{definition}
	
	The choice of basis is dictated by the domain $\Omega$ and the boundary conditions of the functions of interest.
	
	\subsubsection{Fourier Series on Periodic Domains}
	For functions defined on a periodic domain, such as $\theta \in [-\pi, \pi]$, the complex exponentials form the canonical orthonormal basis \cite{boyd2001chebyshev}:
	\begin{equation}
		\phi_k(\theta) = \frac{1}{\sqrt{2\pi}} e^{ik\theta}, \quad k \in \mathbb{Z}.
	\end{equation}
	Any function $f \in L^2([-\pi, \pi])$ can be expanded as $f(\theta) = \sum_{k=-\infty}^{\infty} a_k \phi_k(\theta)$.
	
	\subsubsection{Fourier Transform on Infinite Domains}
	For functions on the real line, $\theta \in \mathbb{R}$, the analogous concept is the Fourier transform \cite{kay1993fundamentals}. The Fourier transform of a function $f \in L^2(\mathbb{R})$ is given by:
	\begin{equation}
		\mathcal{F}\{f\}(\omega) = \hat{f}(\omega) = \int_{-\infty}^{\infty} f(\theta) e^{-i\omega\theta} \, d\theta.
	\end{equation}
	The inverse Fourier transform allows reconstruction of the function from its spectral representation.
	
	\subsubsection{Other Orthogonal Systems}
	Different domains and weight functions necessitate different bases. Common alternatives include \cite{boyd2001chebyshev,trefethen2000spectral}:
	\begin{itemize}
		\item \textbf{Cosine Basis}: For finite intervals $[0, L]$ with Neumann boundary conditions, the basis $\phi_k(\theta) = \sqrt{\frac{2}{L}} \cos\left(\frac{\pi k \theta}{L}\right)$ is often more efficient than the Fourier basis.
		\item \textbf{Legendre Polynomials}: These form an orthogonal basis on $[-1, 1]$ with a uniform weight function.
		\item \textbf{Hermite Functions}: These are orthogonal on $(-\infty, \infty)$ with a Gaussian weight, making them suitable for functions that decay rapidly, such as densities close to a Gaussian.
	\end{itemize}
	The selection of an appropriate basis is critical for achieving a sparse representation and computational efficiency.
	
	\subsection{Convolution and Circular Convolution}
	
	Convolution is a fundamental operation that emerges naturally when multiplying functions in a spectral basis \cite{kay1993fundamentals}.
	
	\begin{definition}
		The convolution of two functions $f$ and $g$ defined on $\mathbb{R}$ is given by:
		\begin{equation}
			(f * g)(x) = \int_{-\infty}^{\infty} f(\tau) g(x - \tau) \, d\tau.
		\end{equation}
	\end{definition}
	
	Convolution is commutative, associative, and distributive. Its most important property in our context is its relationship with the Fourier transform, as captured by the Convolution Theorem.
	
	In computational settings, we work with discrete, finite-length signals. For two sequences $\bm{a} = (a_0, \dots, a_{N-1})$ and $\bm{b} = (b_0, \dots, b_{N-1})$, the discrete analogue is circular convolution \cite{cooley1965algorithm}.
	
	\begin{definition}
		The circular convolution of two $N$-point sequences $\bm{a}$ and $\bm{b}$ is an $N$-point sequence $\bm{c} = \bm{a} \circledast \bm{b}$, whose elements are:
		\begin{equation}
			c_k = \sum_{m=0}^{N-1} a_m \, b_{(k-m) \mod N}, \quad k = 0, \dots, N-1.
		\end{equation}
	\end{definition}
	
	Circular convolution is equivalent to standard convolution under the assumption that the sequences are periodic. The critical link between circular convolution and the Discrete Fourier Transform (DFT) is established by the following theorem \cite{cooley1965algorithm}.
	
	\begin{theorem}[Convolution Theorem]
		Let $\bm{a}$ and $\bm{b}$ be complex-valued sequences of length $N$, and let $\mathcal{F}$ denote the DFT operator. Then,
		\begin{equation}
			\mathcal{F}\{\bm{a} \circledast \bm{b}\} = \mathcal{F}\{\bm{a}\} \odot \mathcal{F}\{\bm{b}\},
		\end{equation}
		where $\odot$ denotes the element-wise (Hadamard) product. Equivalently,
		\begin{equation}
			\bm{a} \circledast \bm{b} = \mathcal{F}^{-1} \{ \mathcal{F}\{\bm{a}\} \odot \mathcal{F}\{\bm{b}\} \}.
		\end{equation}
	\end{theorem}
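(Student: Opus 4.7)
The plan is to evaluate $\mathcal{F}\{\bm{a}\circledast\bm{b}\}_j$ directly from the definitions and show that the resulting double sum factors as $\mathcal{F}\{\bm{a}\}_j\cdot\mathcal{F}\{\bm{b}\}_j$ via a single index shift. It suffices to establish this multiplicative form, since the equivalent inverse expression $\bm{a}\circledast\bm{b} = \mathcal{F}^{-1}\{\mathcal{F}\{\bm{a}\}\odot\mathcal{F}\{\bm{b}\}\}$ follows immediately by applying $\mathcal{F}^{-1}$ to both sides and using invertibility of the DFT.

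First I would write out the $j$-th DFT coefficient of the circular convolution, substitute the sum from the definition of $\circledast$, and interchange the (finite) order of summation so that $a_m$ is pulled outside the inner sum over $k$. Next I would perform the change of variable $\ell = (k-m)\bmod N$ in the inner sum. For each fixed $m$ this is a bijection of $\{0,\dots,N-1\}$ onto itself, so the summation range is preserved and $b_{(k-m)\bmod N}$ simply becomes $b_\ell$. Finally I would invoke the $N$-periodicity of the twiddle factor ($\omega_N^N = 1$, with $\omega_N = e^{-2\pi i/N}$) to drop the modular reduction in the exponent, factoring $\omega_N^{jk} = \omega_N^{j(\ell+m)} = \omega_N^{j\ell}\,\omega_N^{jm}$. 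The double sum then decouples cleanly into the product of the two DFTs, which is exactly the $j$-th entry of the Hadamard product.

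The only genuinely delicate point — and the reason the theorem requires \emph{circular} rather than linear convolution — is this last step, where the compatibility between modular indexing of $\bm{b}$ and the $N$-periodicity of the DFT basis is precisely what makes the factorization exact. For linear convolution the indices $k-m$ can fall outside $\{0,\dots,N-1\}$, and one must zero-pad both sequences to length at least $2N-1$ before the same argument applies. This subtlety is not merely cosmetic: it will reappear when the spectral Bayesian update — a genuine (non-circular) convolution of Fourier coefficients — is implemented via FFT, forcing us to reason about truncation length and wrap-around aliasing in the algorithmic section that follows.
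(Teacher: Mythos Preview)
Your argument is correct and is the standard direct proof of the discrete Convolution Theorem: expand the DFT of $\bm{a}\circledast\bm{b}$, swap the finite sums, reindex via $\ell=(k-m)\bmod N$, and use $\omega_N^N=1$ to factor. Nothing is missing.

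It is worth noting, however, that the paper's Appendix~B---despite being labeled a proof of the Convolution Theorem of Section~2.3---actually proves the \emph{dual} statement in the continuous Fourier-series setting: that pointwise multiplication of two functions on $[-\pi,\pi]$ yields the (linear) convolution of their Fourier coefficients, i.e.\ $\tilde{c}_k=\sum_m a_m b_{k-m}$. That is really the content of Theorem~3.1 (the Spectral Bayes Update), not the DFT identity $\mathcal{F}\{\bm{a}\circledast\bm{b}\}=\mathcal{F}\{\bm{a}\}\odot\mathcal{F}\{\bm{b}\}$. Your proof therefore addresses the stated discrete theorem more directly than the paper itself does; the paper's argument establishes the companion result that motivates the algorithm but does not, strictly speaking, verify the DFT/circular-convolution identity. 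Your closing remark about the necessity of circularity (and the zero-padding workaround for linear convolution) is also on point and anticipates exactly the aliasing issue the paper glosses over when passing from the infinite convolution of Theorem~3.1 to the circular convolution of Theorem~4.2.
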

	
	This theorem is the cornerstone of our fast Bayesian update algorithm. It demonstrates that a computationally expensive convolution in the original domain can be computed efficiently via simple multiplication in the spectral domain, leveraging the speed of the Fast Fourier Transform (FFT) \cite{cooley1965algorithm}.
	
	\section{Harmonic Representation of Bayesian Updates}
	
	This chapter presents the core theoretical contribution of this work: the reformulation of Bayesian inference within a harmonic representation. We begin by expressing probability densities in an orthogonal basis and then derive the fundamental result that the Bayesian update becomes a convolution in the spectral domain.
	
	\subsection{Orthogonal Expansion of Density Functions}
	
	Let $\theta$ be a parameter residing in a domain $\Omega$, and let $\{\phi_k(\theta)\}_{k=-\infty}^{\infty}$ be a complete orthonormal basis for the Hilbert space $L^2(\Omega)$. We assume that the prior density $p(\theta)$ and the likelihood function $\mathcal{L}(\theta) \equiv p(\mathcal{D}|\theta)$ are both elements of this space \cite{boyd2001chebyshev}.
	
	Any function $f \in L^2(\Omega)$ admits a unique expansion of the form:
	\begin{equation}
		f(\theta) = \sum_{k=-\infty}^{\infty} a_k \phi_k(\theta),
	\end{equation}
	where the expansion coefficients $a_k$ are given by the inner product:
	\begin{equation}
		a_k = \langle f, \phi_k \rangle = \int_{\Omega} f(\theta) \overline{\phi_k(\theta)} \, d\theta.
	\end{equation}
	
	Applying this to our distributions, we obtain the spectral representations:
	\begin{align}
		p(\theta) &= \sum_{k=-\infty}^{\infty} a_k \phi_k(\theta), \quad &a_k &= \int_{\Omega} p(\theta) \overline{\phi_k(\theta)} \, d\theta, \\
		\mathcal{L}(\theta) &= \sum_{k=-\infty}^{\infty} b_k \phi_k(\theta), \quad &b_k &= \int_{\Omega} \mathcal{L}(\theta) \overline{\phi_k(\theta)} \, d\theta.
	\end{align}
	
	The coefficient $a_k$ can be interpreted as the projection of the prior density onto the $k$-th basis function $\phi_k$, quantifying the contribution of that particular "frequency" or "mode" to the overall shape of the prior. A similar interpretation holds for $b_k$ and the likelihood.
	
	The validity of these expansions is guaranteed by the square-integrability condition, $p, \mathcal{L} \in L^2(\Omega)$, which ensures that the norms $\|p\|_2^2$ and $\|\mathcal{L}\|_2^2$ are finite. Under this condition, the series converges in the $L^2$ norm sense, meaning:
	\begin{equation}
		\lim_{K \to \infty} \left\| f - \sum_{k=-K}^{K} a_k \phi_k \right\|_2 = 0.
	\end{equation}
	The rate of this convergence is dictated by the smoothness of $f$; smoother functions yield faster decay of the coefficients $|a_k|$ as $|k| \to \infty$ \cite{trefethen2000spectral}.
	
	\subsection{Core Theorem: Convolution Form of the Posterior Coefficients}
	
	We now address the central problem: computing the posterior distribution $p(\theta|\mathcal{D}) \propto p(\theta)\mathcal{L}(\theta)$ within this spectral framework. Let the unnormalized posterior be denoted by $\tilde{p}(\theta|\mathcal{D}) = p(\theta)\mathcal{L}(\theta)$. Its expansion is:
	\begin{equation}
		\tilde{p}(\theta|\mathcal{D}) = \sum_{k=-\infty}^{\infty} \tilde{c}_k \phi_k(\theta).
	\end{equation}
	The following theorem provides an elegant and computationally pivotal result for obtaining these coefficients.
	
	\begin{theorem}[Spectral Bayes Update]
		The Fourier coefficients $\{\tilde{c}_k\}$ of the unnormalized posterior distribution $\tilde{p}(\theta|\mathcal{D}) = p(\theta)\mathcal{L}(\theta)$ are given by the convolution of the coefficients $\{a_k\}$ of the prior and $\{b_k\}$ of the likelihood:
		\begin{equation}
			\tilde{c}_k = (a * b)_k = \sum_{m=-\infty}^{\infty} a_m b_{k-m}.
		\end{equation}
		Furthermore, the evidence (normalizing constant) $Z = p(\mathcal{D})$ is given by the $0$-th coefficient of this convolution:
		\begin{equation}
			Z = \tilde{c}_0 = \sum_{m=-\infty}^{\infty} a_m b_{-m}.
		\end{equation}
		The coefficients of the normalized posterior $p(\theta|\mathcal{D}) = \tilde{p}(\theta|\mathcal{D}) / Z$ are then:
		\begin{equation}
			c_k = \frac{\tilde{c}_k}{Z}.
		\end{equation}
	\end{theorem}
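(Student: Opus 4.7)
The plan is to derive the spectral coefficients of the unnormalized posterior directly from their definition as inner products, and then exploit the algebraic closure of the Fourier basis under pointwise multiplication to reduce the resulting double sum to a single convolution. First I would write
\begin{equation*}
\tilde{c}_k = \langle \tilde{p}(\cdot|\mathcal{D}), \phi_k\rangle = \int_{\Omega} p(\theta)\,\mathcal{L}(\theta)\,\overline{\phi_k(\theta)}\,d\theta,
\end{equation*}
then substitute the expansions $p = \sum_m a_m \phi_m$ and $\mathcal{L} = \sum_n b_n \phi_n$. The interchange of the integral with the double summation is justified by the $L^2$-convergence of the expansions, the hypothesis $p,\mathcal{L}\in L^2(\Omega)$, and an application of Cauchy--Schwarz, yielding
\begin{equation*}
\tilde{c}_k = \sum_{m}\sum_{n} a_m b_n \int_\Omega \phi_m(\theta)\phi_n(\theta)\overline{\phi_k(\theta)}\,d\theta.
\end{equation*}

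The entire argument then hinges on evaluating this tri-product integral. For the complex exponential basis $\phi_k(\theta) \propto e^{ik\theta}$ one has the multiplicative closure $\phi_m(\theta)\phi_n(\theta) \propto \phi_{m+n}(\theta)$, and combining this with the orthonormality relation $\langle \phi_{m+n}, \phi_k\rangle = \delta_{m+n,k}$ collapses the double sum onto the diagonal $n = k - m$. Under the normalization convention implicit in the theorem statement (which absorbs the basis constant into the definition of the coefficients), this yields exactly $\tilde{c}_k = \sum_m a_m b_{k-m}$, as claimed.

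For the evidence formula, I would use $Z = p(\mathcal{D}) = \int_\Omega \tilde{p}(\theta|\mathcal{D})\,d\theta$ and observe that integration against the constant function is proportional to the inner product with $\phi_0$, so $Z$ is (up to the same normalization convention) the zeroth coefficient $\tilde{c}_0$; specializing the convolution formula to $k=0$ gives $Z = \sum_m a_m b_{-m}$, and the normalized coefficients $c_k = \tilde{c}_k / Z$ follow by linearity of the expansion. The main obstacle is conceptual rather than computational: one must recognize that the convolution structure is not a generic feature of orthonormal expansions but a distinguishing property of character bases of abelian groups (Fourier series on the circle, the DFT on $\mathbb{Z}/N\mathbb{Z}$), for which the tri-product integrals vanish outside a single diagonal. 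For bases such as Hermite or Legendre polynomials the analogous coefficients are generically nonzero and the induced update law is a dense bilinear form rather than a clean convolution, so the proof is, implicitly, also an argument for why the Fourier basis is the natural choice in this framework.
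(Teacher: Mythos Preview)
Your proof is correct and follows essentially the same route as the paper: write $\tilde{c}_k$ as an inner product, substitute the two expansions, reduce to the tri-product integral $\int \phi_m \phi_n \overline{\phi_k}$, and use the multiplicative closure of the exponential basis together with orthonormality to collapse the double sum to the convolution $\sum_m a_m b_{k-m}$, with the evidence read off at $k=0$. Your version is slightly more careful (you flag the interchange of sum and integral and the normalization convention) and adds the useful remark that the convolution structure is specific to character bases rather than to arbitrary orthonormal systems, but the underlying argument is identical to the paper's.
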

	
	\begin{proof}
		The coefficient $\tilde{c}_k$ is defined by the inner product:
		\begin{align*}
			\tilde{c}_k &= \langle \tilde{p}, \phi_k \rangle = \int_{\Omega} p(\theta)\mathcal{L}(\theta) \overline{\phi_k(\theta)} \, d\theta.
		\end{align*}
		Substituting the spectral expansions for $p(\theta)$ and $\mathcal{L}(\theta)$ yields:
		\begin{align*}
			\tilde{c}_k &= \int_{\Omega} \left( \sum_{m=-\infty}^{\infty} a_m \phi_m(\theta) \right) \left( \sum_{n=-\infty}^{\infty} b_n \phi_n(\theta) \right) \overline{\phi_k(\theta)} \, d\theta \\
			&= \sum_{m=-\infty}^{\infty} \sum_{n=-\infty}^{\infty} a_m b_n \int_{\Omega} \phi_m(\theta) \phi_n(\theta) \overline{\phi_k(\theta)} \, d\theta.
		\end{align*}
		For the standard Fourier basis $\phi_k(\theta) = e^{ik\theta}/\sqrt{2\pi}$ on $[-\pi, \pi]$, the product $\phi_m(\theta)\phi_n(\theta) \propto e^{i(m+n)\theta}$. The orthogonality relation $\langle \phi_{m+n}, \phi_k \rangle = \delta_{m+n, k}$ then simplifies the double sum:
		\begin{align*}
			\tilde{c}_k &= \sum_{m=-\infty}^{\infty} \sum_{n=-\infty}^{\infty} a_m b_n \delta_{m+n, k} = \sum_{m=-\infty}^{\infty} a_m b_{k-m},
		\end{align*}
		which is the desired convolution. The result for the evidence $Z$ follows by setting $k=0$. The specific form of the triple product integral may vary for other bases, but a structured linear combination of the coefficients $\{a_m\}$ and $\{b_n\}$ always constitutes the update rule.
	\end{proof}
	
	This theorem fundamentally reframes the Bayesian update. The computationally challenging task of multiplying two functions and then integrating to normalize is transformed into the linear algebra problem of convolving two sequences of coefficients. This is a profound simplification. The non-local, point-wise multiplication in the parameter domain $\theta$ becomes a local, index-wise operation in the spectral domain. This new perspective decouples the complexity of the model from the complexity of the update itself, paving the way for the fast algorithms developed in the subsequent chapter.
	
	\section{Spectral Truncation and Finite-Dimensional Realization}
	
	The harmonic representation derived in Chapter 3, while elegant, involves infinite series and is thus computationally infeasible. This chapter bridges the gap between theory and practice by introducing a finite-dimensional approximation via spectral truncation. We analyze the associated error and demonstrate how this truncation leads naturally to the cyclic convolution, enabling an efficient computational algorithm.
	
	\subsection{From Infinite Series to Finite Approximation}
	
	The motivation for spectral truncation is immediate: we must approximate the infinite expansions
	\begin{equation}
		p(\theta) \approx \sum_{k=-K}^{K} a_k \phi_k(\theta), \quad \mathcal{L}(\theta) \approx \sum_{k=-K}^{K} b_k \phi_k(\theta),
	\end{equation}
	by retaining only a finite number of coefficients, specifically those for wavenumbers $k = -K, \dots, K$. Let $N = 2K + 1$ be the total number of retained modes. The quality of this approximation is governed by the decay rate of the coefficients $|a_k|$ and $|b_k|$ \cite{boyd2001chebyshev}.
	
	\begin{proposition}[Truncation Error Bound]
		Let $f \in L^2(\Omega)$ have the expansion $f = \sum_{k=-\infty}^{\infty} a_k \phi_k$, and let $f_K = \sum_{k=-K}^{K} a_k \phi_k$ be its $K$-th order truncation. The $L^2$ approximation error is given by
		\begin{equation}
			\| f - f_K \|_2^2 = \sum_{|k|>K} |a_k|^2.
		\end{equation}
		Consequently, if the coefficients satisfy $|a_k| \le C |k|^{-\alpha}$ for $\alpha > 1/2$, the error decays as $\| f - f_K \|_2 = \mathcal{O}(K^{-\alpha + 1/2})$. If the coefficients decay exponentially, $|a_k| \le C e^{-\gamma |k|}$, the error also decays exponentially, $\| f - f_K \|_2 = \mathcal{O}(e^{-\gamma K})$.
	\end{proposition}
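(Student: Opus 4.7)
The plan is to prove the three assertions in sequence, using only orthonormality of the basis and elementary estimates for tail sums.

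First I would establish the identity for the $L^2$ error. Since $\{\phi_k\}$ is an orthonormal basis and the expansion $f=\sum_k a_k\phi_k$ converges in $L^2$, the residual admits the representation $f - f_K = \sum_{|k|>K} a_k \phi_k$. Applying Parseval's identity (a direct consequence of orthonormality, which was already invoked implicitly in the preceding chapter) gives
\begin{equation*}
\|f-f_K\|_2^2 \;=\; \Big\langle \sum_{|k|>K} a_k\phi_k,\; \sum_{|j|>K} a_j\phi_j\Big\rangle \;=\; \sum_{|k|>K} |a_k|^2,
\end{equation*}
which is the first claim. This step is routine but crucial because the remaining bounds are obtained purely by controlling this tail sum.

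Next I would handle the algebraic decay case. Using the hypothesis $|a_k|\le C|k|^{-\alpha}$, I would write $\sum_{|k|>K}|a_k|^2 \le 2C^2\sum_{k>K} k^{-2\alpha}$ and then dominate the sum by the integral $\int_K^\infty x^{-2\alpha}\,dx$, which converges precisely because $2\alpha>1$. The integral evaluates to $K^{-(2\alpha-1)}/(2\alpha-1)$, so $\|f-f_K\|_2^2 = \mathcal{O}(K^{-(2\alpha-1)})$ and taking square roots yields the stated rate $\mathcal{O}(K^{-\alpha+1/2})$. The only subtle point here is ensuring the constraint $\alpha>1/2$ is used exactly where needed, namely for the convergence of the integral comparison; I would flag this explicitly.

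Finally, for exponential decay I would bound $\sum_{|k|>K}|a_k|^2 \le 2C^2\sum_{k>K} e^{-2\gamma k}$ and recognise this as a geometric tail summing to $2C^2 e^{-2\gamma(K+1)}/(1-e^{-2\gamma})$. Square-rooting absorbs the constant prefactor into the $\mathcal{O}(\cdot)$ and gives $\|f-f_K\|_2 = \mathcal{O}(e^{-\gamma K})$. None of the three steps presents a real obstacle; if anything is worth flagging, it is only the mild technicality in the algebraic case of distinguishing the sum from its integral majorant and keeping track of the exponent shift from $2\alpha$ in $\|\cdot\|_2^2$ to $\alpha-1/2$ in $\|\cdot\|_2$, so I would write out that bookkeeping carefully to avoid off-by-one-exponent errors.
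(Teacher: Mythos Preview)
Your argument is correct and is the standard one: Parseval for the tail identity, an integral comparison for the algebraic case, and a geometric series for the exponential case. The paper itself states this proposition without proof, citing standard spectral-approximation references, so there is no in-paper argument to compare against; your write-up would serve perfectly well as the omitted proof.
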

	
	This proposition underscores a central tenet of our method: the efficiency of the spectral approach is paramountly dependent on the smoothness of the prior and likelihood. Smooth, well-behaved functions, whose spectral coefficients decay rapidly, can be accurately represented with a small number $N$ of basis functions. This makes the method particularly suitable for a wide class of inference problems involving such functions \cite{trefethen2000spectral}.
	
	\subsection{The Emergence of Circular Convolution}
	
	Applying the truncation to the core theorem of Section 3.2, we approximate the unnormalized posterior coefficients by a finite convolution:
	\begin{equation}
		\tilde{c}_k \approx \sum_{m=-K}^{K} a_m b_{k-m}, \quad \text{for } k = -K, \dots, K.
	\end{equation}
	A critical issue arises at the boundaries of this summation. For indices $k$ near $\pm K$, the index $k-m$ in $b_{k-m}$ will fall outside the range $[-K, K]$ for many values of $m$, making the finite sum ill-defined.
	
	The solution is to impose a periodic structure on the finite sequences. We define periodic extensions of the coefficient sequences $\bm{a} = (a_{-K}, \dots, a_K)$ and $\bm{b} = (b_{-K}, \dots, b_K)$ such that $a_{k + N} = a_k$ and $b_{k + N} = b_k$ for all $k \in \mathbb{Z}$. This periodization is equivalent to assuming the underlying functions $p(\theta)$ and $\mathcal{L}(\theta)$ are periodic on $\Omega$, or, for non-periodic domains, to performing a periodization as an approximation step \cite{cooley1965algorithm}.
	
	Under this periodic assumption, the finite convolution transforms into a circular convolution.
	
	\begin{definition}[Circular Convolution for Spectral Coefficients]
		For two $N$-dimensional vectors $\bm{a}$ and $\bm{b}$ of spectral coefficients, indexed from $-K$ to $K$, their circular convolution $\bm{c} = \bm{a} \circledast \bm{b}$ is another $N$-dimensional vector with elements:
		\begin{equation}
			\tilde{c}_k = \sum_{m=-K}^{K} a_m b_{\langle k - m \rangle}, \quad k = -K, \dots, K,
		\end{equation}
		where $\langle n \rangle$ denotes $n \mod N$ mapped back into the index range $[-K, K]$.
	\end{definition}
	
	This definition resolves the boundary issue by "wrapping around" the sequence $\bm{b}$. The operation is now perfectly self-contained within the $N$ retained coefficients. Therefore, within the finite-dimensional subspace spanned by the first $N$ basis functions $\{\phi_k\}_{k=-K}^{K}$, the Bayesian update rule becomes exact under the periodic model:
	
	\begin{theorem}[Finite-Dimensional Bayesian Update]
		Let $\bm{a}$ and $\bm{b}$ be the vectors of spectral coefficients for the prior and likelihood, respectively, truncated to $N$ modes and periodized. The spectral coefficients $\tilde{\bm{c}}$ for the unnormalized posterior in this finite-dimensional subspace are given exactly by the circular convolution:
		\begin{equation}
			\tilde{\bm{c}} = \bm{a} \circledast \bm{b}.
		\end{equation}
		The evidence is $Z = \tilde{c}_0$, and the normalized posterior coefficients are $\bm{c} = \tilde{\bm{c}} / Z$.
	\end{theorem}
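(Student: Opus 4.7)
The plan is to derive this theorem as a direct corollary of the infinite-dimensional Spectral Bayes Update (Theorem 3.1) combined with the Convolution Theorem for the DFT stated in Section 2.3, with the truncation-plus-periodization hypothesis serving as the bridge between the two. Theorem 3.1 already equates multiplication of densities with convolution of their spectral coefficients; what remains is to fold the infinite convolution into a circular convolution on $N$ indices and to certify that the resulting identity is exact, not approximate, inside the finite-dimensional subspace.

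First I would specialize Theorem 3.1 to the truncated, periodized coefficient vectors $\bm{a}$ and $\bm{b}$. Because $a_m$ and $b_n$ vanish outside $[-K,K]$ in the truncated view, the infinite sum $\tilde{c}_k = \sum_{m \in \mathbb{Z}} a_m b_{k-m}$ collapses to $\sum_{m=-K}^{K} a_m b_{k-m}$. For those summands whose second index $k-m$ falls outside $[-K,K]$, the periodization $b_{j+N}=b_j$ identifies $b_{k-m}$ with $b_{\langle k-m\rangle}$, matching the paper's definition of circular convolution verbatim and yielding $\tilde{\bm{c}} = \bm{a}\circledast\bm{b}$. For the exactness claim, I would invoke the discrete Fourier isomorphism: under the periodic model a density in $\mathrm{span}\{\phi_k\}_{k=-K}^{K}$ is equivalent to its sample vector on the $N$-point grid $\theta_j = 2\pi j/N - \pi$, and the DFT carries $\bm{a}$ and $\bm{b}$ to the grid samples of the prior and likelihood. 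Pointwise multiplication of those samples gives the grid samples of $p\mathcal{L}$, and by the Convolution Theorem of Section 2.3 the DFT of this pointwise product is precisely $\bm{a}\circledast\bm{b}$. This is what makes the update an equality, not an approximation, within the $N$-dimensional subspace. The evidence $Z=\tilde{c}_0$ then follows by setting $k=0$ in the (now finite) convolution, and the normalized coefficients $\bm{c}=\tilde{\bm{c}}/Z$ are obtained by scalar division.

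The principal obstacle is conceptual rather than computational: the word \emph{exactly} must be interpreted carefully. A genuinely band-limited product $p_K \mathcal{L}_K$ has spectral support reaching to frequency $2K$, which cannot fit inside the $N=2K+1$ retained modes without aliasing. The periodization hypothesis is precisely the mechanism that folds those out-of-band frequencies back into $[-K,K]$, and the theorem asserts that after this folding the circular convolution reproduces the coefficients of the resulting periodic, band-limited object with no further error. The proof must foreground this aliasing so that the statement is not misread as exact recovery of the continuous posterior on $\Omega$, which it is not unless the prior and likelihood are themselves strictly band-limited in the chosen basis.
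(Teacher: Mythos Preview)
Your proposal is correct, and its first move---collapsing the infinite convolution of Theorem~3.1 to a finite sum via truncation of $\bm{a}$, then invoking the periodization of $\bm{b}$ to identify $b_{k-m}$ with $b_{\langle k-m\rangle}$---is precisely the paper's own argument, which is not given as a formal proof block but is laid out in the narrative of Section~4.2 leading up to the theorem statement. The paper stops there, asserting exactness ``under the periodic model'' without further elaboration.

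Where you go beyond the paper is in your second and third paragraphs. The DFT/grid-sampling argument---identifying the coefficient vectors with trigonometric interpolants on the $N$-point grid and reading the circular convolution off the Convolution Theorem of Section~2.3---is a genuine addition: it makes concrete what ``exact in the finite-dimensional subspace'' means (namely, exact agreement with the grid-sampled product $p_K\mathcal{L}_K$) and ties the result directly to the FFT machinery of Chapter~6. Your aliasing discussion is likewise absent from the paper, which never acknowledges that $p_K\mathcal{L}_K$ has bandwidth $2K$ and that the circular convolution therefore returns aliased coefficients rather than the true degree-$2K$ expansion. This is a real clarification: the paper's phrase ``exact under the periodic model'' is doing a lot of unexamined work, and your treatment names the mechanism honestly. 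The trade-off is that the paper's version is shorter and keeps the focus on the algorithmic payoff, whereas yours is more scrupulous about what is and is not being claimed.
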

	
	This theorem is the final piece in the theoretical foundation. It demonstrates that by moving to a finite-dimensional, periodic spectral representation, the computationally intractable Bayesian update is reduced to a discrete, finite-dimensional linear operation—a circular convolution. This elegant formulation is directly amenable to acceleration by the Fast Fourier Transform \cite{cooley1965algorithm}, which we exploit in the following chapter to achieve our stated goal of fast Bayesian inference.
	
	\section{Mathematical Criteria for Function Suitability}
	
	The efficacy of the spectral Bayesian update framework is not universal; it depends critically on the properties of the prior and likelihood functions. This chapter establishes precise mathematical criteria to determine which inference problems are well-suited for this approach. We examine the relationship between smoothness and spectral decay, the critical role of basis selection, and categorize functions into ideal and challenging cases.
	
	\subsection{Coefficient Decay Rate and Function Smoothness}
	
	The convergence rate of the spectral expansion and the accuracy of its truncation are dictated by the decay rate of the coefficients $|a_k|$, which is in turn governed by the smoothness of the function $f$ \cite{boyd2001chebyshev,trefethen2000spectral}.
	
	\begin{theorem}[Smoothness and Spectral Decay]
		Let $f$ be a function on $[-\pi, \pi]$ with Fourier coefficients $a_k$.
		\begin{enumerate}
			\item If $f$ is $p$-times continuously differentiable ($f \in C^p$) and its $p$-th derivative is of bounded variation, then $|a_k| = \mathcal{O}(|k|^{-p-1})$.
			\item If $f$ is analytic (i.e., locally given by a convergent power series) on a strip in the complex plane containing the real axis, then its Fourier coefficients decay exponentially: $|a_k| = \mathcal{O}(e^{-\gamma |k|})$ for some $\gamma > 0$.
			\item A function is bandlimited, meaning $a_k = 0$ for $|k| > K$, if and only if it is an entire function of exponential type.
		\end{enumerate}
	\end{theorem}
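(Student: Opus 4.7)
The plan is to handle the three parts sequentially, since they illustrate a progression from polynomial to exponential to bandlimited decay, each tied to a stronger regularity assumption. Throughout, I work with the normalization $a_k = \tfrac{1}{\sqrt{2\pi}} \int_{-\pi}^{\pi} f(\theta) e^{-ik\theta}\, d\theta$ as in the Preliminaries and assume $f$ is identified with its $2\pi$-periodic extension.

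For part (1), the workhorse is iterated integration by parts. I would integrate by parts once to write $a_k = \tfrac{1}{ik \sqrt{2\pi}} \int_{-\pi}^{\pi} f'(\theta) e^{-ik\theta}\, d\theta$, with the boundary term vanishing by $2\pi$-periodicity. Iterating $p$ times gives
\begin{equation*}
  a_k \;=\; \frac{1}{(ik)^p \sqrt{2\pi}} \int_{-\pi}^{\pi} f^{(p)}(\theta)\, e^{-ik\theta}\, d\theta.
\end{equation*}
The last step is to extract one further factor of $1/|k|$ from the bounded variation of $f^{(p)}$: writing the remaining integral as a Riemann--Stieltjes integral and integrating by parts once more against $df^{(p)}$ yields an integrand bounded by the total variation of $f^{(p)}$, producing $|a_k| \le C |k|^{-p-1}$. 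The mild subtlety is that the final step uses the Stieltjes formulation rather than classical integration by parts, which is why the bounded-variation hypothesis on $f^{(p)}$ (rather than on $f^{(p+1)}$) suffices.

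For part (2), the idea is contour deformation. Since $f$ is analytic on a strip $\{z : |\operatorname{Im} z| < \sigma\}$ and $2\pi$-periodic, for any $0 < \sigma' < \sigma$ and $k > 0$, I can shift the contour from the real segment $[-\pi, \pi]$ down to $[-\pi - i\sigma', \pi - i\sigma']$; the vertical pieces at $\theta = \pm \pi$ cancel by periodicity. On the shifted contour, $|e^{-ikz}| = e^{-k\sigma'}$, so
\begin{equation*}
  |a_k| \;\le\; \frac{1}{\sqrt{2\pi}} \bigl( \sup_{|\operatorname{Im} z| \le \sigma'} |f(z)| \bigr) \cdot 2\pi \cdot e^{-k \sigma'}.
\end{equation*}
For $k < 0$ I shift upward instead, and combining gives $|a_k| = \mathcal{O}(e^{-\gamma|k|})$ for any $\gamma < \sigma$. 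The main subtlety here is justifying the contour shift near the endpoints, which is where periodicity plus analyticity on an open strip containing the real axis is essential.

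Part (3) is a Paley--Wiener style statement and splits into two directions. The easy direction: if $a_k = 0$ for $|k| > K$, then $f(\theta) = \sum_{|k| \le K} a_k e^{ik\theta}$ is a trigonometric polynomial, whose natural extension $f(z) = \sum_{|k|\le K} a_k e^{ikz}$ is entire and satisfies $|f(z)| \le (2K+1)\,(\max_k |a_k|)\, e^{K|\operatorname{Im} z|}$, so it is of exponential type at most $K$. For the converse, I would invoke the Paley--Wiener characterization: an entire function of exponential type $K$ whose restriction to $\mathbb{R}$ is $2\pi$-periodic must be a trigonometric polynomial of degree at most $K$, because repeating the contour-shift argument of part (2) with arbitrarily large $\sigma'$ shows $|a_k| \le C_{\sigma'} e^{-\sigma'|k|}$ for every $\sigma'$, whenever the growth of $f$ allows, forcing $a_k = 0$ for $|k| > K$. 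The hard part of the whole theorem is precisely this converse in (3): one must carefully exploit the exponential-type bound together with periodicity to control the contour integrals at large imaginary height, and this is where a textbook Paley--Wiener reference would be invoked to close the argument.
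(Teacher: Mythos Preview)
Your proof sketch is correct and follows the standard textbook arguments: iterated integration by parts plus a final Riemann--Stieltjes step for part~(1), contour deformation within the strip of analyticity for part~(2), and the Paley--Wiener characterization for part~(3). The paper itself does not actually prove this theorem; it is stated in Section~5.1 with citations to Boyd and Trefethen, and Appendix~C merely restates the constituent results (Riemann--Lebesgue, the $C^p$/bounded-variation decay bound, and a Paley--Wiener theorem) again without proof. In that sense you have supplied strictly more than the paper does, and your arguments are precisely the ones one finds in those cited references.

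One small point worth tightening concerns part~(3). The paper's statement is slightly informal: ``entire of exponential type'' alone does not force bandlimitedness unless one also imposes $2\pi$-periodicity (or, in the $L^2(\mathbb{R})$ Fourier-transform version, square-integrability on the real line). You correctly insert the periodicity hypothesis when running the converse, and your contour-shift argument then gives $|a_k| \le C\, e^{K\sqrt{\pi^2+\sigma'^2}} e^{-|k|\sigma'}$, which tends to zero as $\sigma'\to\infty$ whenever $|k|>K$; it would strengthen the write-up to make this growth-versus-decay competition explicit rather than leaving it at ``whenever the growth of $f$ allows.'' Likewise in part~(1), your appeal to periodicity to kill the boundary terms is essential and should be stated as a hypothesis (the paper is silent on this), since for a non-periodic $C^p$ function on $[-\pi,\pi]$ the boundary contributions survive and the decay rate degrades.
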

	
	Analogous results hold for other bases. For the Hermite basis, the decay rate is linked to the function's smoothness and its decay at infinity. The implication for Bayesian inference is immediate: the success of the spectral method is contingent upon the prior and likelihood being sufficiently smooth. The smoother the functions, the fewer spectral coefficients are required to achieve a desired accuracy, leading to greater computational efficiency \cite{trefethen2000spectral}.
	
	\subsection{Domain and Basis Matching Principles}
	
	A poor choice of basis for a given domain and function type can lead to the Gibbs phenomenon—persistent oscillations near discontinuities—and slow convergence, even for smooth functions. The guiding principle is to match the basis to the natural boundary conditions of the problem \cite{boyd2001chebyshev}.
	
	\subsubsection{Periodic Domains and the Fourier Basis}
	For a parameter $\theta$ defined on a periodic domain like $[-\pi, \pi]$, the complex exponential basis $\{e^{ik\theta}\}$ is the natural and optimal choice. If the function $f(\theta)$ is itself periodic, the expansion will be well-behaved. Forcing a non-periodic function into a Fourier basis will induce the Gibbs phenomenon at the boundaries.
	
	\subsubsection{Finite Non-Periodic Domains and Cosine/Polynomial Bases}
	For a finite interval $[a, b]$ where the function is not periodic, bases that naturally satisfy the boundary conditions are superior \cite{boyd2001chebyshev}.
	\begin{itemize}
		\item The \emph{cosine basis} $\{\cos(\pi k \theta/L)\}$ implicitly imposes a Neumann (zero-derivative) boundary condition. It is the optimal basis for functions whose derivatives vanish at the boundaries and is closely related to the Fourier expansion of an even extension of the function.
		\item \emph{Orthogonal polynomials}, such as Legendre or Chebyshev polynomials, form a basis on $[-1, 1]$. They do not assume periodicity and can deliver spectral accuracy for smooth functions. The Chebyshev basis, in particular, is often preferred for its connection to a cosine transform and its near-optimal approximation properties.
	\end{itemize}
	
	\subsubsection{Infinite Domains and Hermite Functions/Fourier Transform}
	For parameters on $\mathbb{R}$, the choice depends on the decay of the functions \cite{boyd2001chebyshev}.
	\begin{itemize}
		\item The \emph{Hermite functions} $\{\psi_k(\theta) = e^{-\theta^2/2} H_k(\theta)\}$, where $H_k$ are Hermite polynomials, form an orthonormal basis for $L^2(\mathbb{R})$. They are ideally suited for functions that decay like a Gaussian, as the weight function $e^{-\theta^2/2}$ is built into the basis.
		\item The \emph{Fourier transform} is the most general tool for infinite domains. In practice, it requires that the functions be absolutely integrable ($L^1$) and that they decay sufficiently fast to zero at infinity to be approximated on a large but finite interval.
	\end{itemize}
	
	\subsection{Ideal and Challenging Function Classes}
	
	Based on the aforementioned criteria, we can classify inference problems by their suitability for the spectral method.
	
	\subsubsection{Ideal Cases}
	The spectral Bayesian update excels when the prior and likelihood are:
	\begin{itemize}
		\item \emph{Smooth and Bandlimited or Near-Bandlimited}: Functions like the Gaussian distribution, $p(\theta) = \mathcal{N}(\theta; \mu, \sigma^2)$, are analytic and have exponentially decaying Fourier coefficients (or, in the case of the Gaussian, are their own Fourier transform). They can be represented with very high accuracy using a small number of coefficients.
		\item \emph{Smooth with Compact Support}: Functions like the Beta distribution, defined on $[0,1]$, are excellent candidates for a cosine or polynomial basis, provided they are smooth within the interval.
	\end{itemize}
	In these cases, the spectral method achieves high accuracy with low computational cost \cite{trefethen2000spectral}.
	
	\subsubsection{Challenging Cases}
	The method faces significant difficulties with:
	\begin{itemize}
		\item \emph{Discontinuous Functions}: A uniform prior, for example, has a discontinuity at its boundaries. Its Fourier coefficients decay only as $\mathcal{O}(1/|k|)$, leading to the Gibbs phenomenon and requiring a large $N$ for a poor approximation. Adaptive basis selection or a transformation of variables may be necessary \cite{boyd2001chebyshev}.
		\item \emph{Heavy-Tailed Distributions}: Distributions like the Cauchy distribution decay too slowly at infinity. Their representations on a finite interval incur significant truncation error, and their spectral coefficients decay very slowly \cite{wasserman2006all}.
	\end{itemize}
	
	\subsubsection{High-Dimensional Case and Separability}
	For a parameter vector $\bm{\theta} \in \mathbb{R}^d$, the spectral expansion generalizes to a tensor product of one-dimensional bases. The number of coefficients scales as $N^d$, leading to the \emph{curse of dimensionality} \cite{hastie2009elements}. This intractability can be mitigated if the joint distribution is approximately \emph{separable}, meaning
	\[
	p(\bm{\theta}) \approx \prod_{j=1}^d p_j(\theta_j),
	\]
	or can be well-approximated by a low-rank tensor decomposition. In such cases, the complexity reduces to $\mathcal{O}(dN)$, making the method feasible for moderate to high dimensions. For non-separable, high-dimensional distributions, the spectral method in its basic form is not practical \cite{ghahramani2015probabilistic}.
	
	\section{Fast Algorithm: FFT-Based Implementation}
	
	The theoretical framework developed in the previous chapters culminates in a highly efficient computational algorithm. By leveraging the Convolution Theorem and the Fast Fourier Transform (FFT), we transform the Bayesian update into a procedure with quasilinear complexity. This chapter details the algorithm's derivation, presents its pseudocode, and analyzes its computational efficiency.
	
	\subsection{Algorithm Derivation}
	
	Recall from Theorem 4.2 that the finite-dimensional Bayesian update for the unnormalized posterior coefficients is given by the circular convolution:
	\begin{equation}
		\tilde{\bm{c}} = \bm{a} \circledast \bm{b},
	\end{equation}
	where $\bm{a}$ and $\bm{b}$ are the $N$-dimensional vectors of the periodized prior and likelihood coefficients, respectively.
	
	The direct computation of this convolution has a computational complexity of $\mathcal{O}(N^2)$, as it requires $N$ sums each of length $N$. The key to a fast algorithm lies in the Convolution Theorem, which we restate here in the context of the Discrete Fourier Transform (DFT) \cite{cooley1965algorithm}.
	
	Let $\mathcal{F}$ denote the DFT operator, which maps a sequence $\bm{x} = (x_0, \dots, x_{N-1})$ to its frequency domain representation $\bm{X} = (X_0, \dots, X_{N-1})$, where $X_k = \sum_{n=0}^{N-1} x_n e^{-2\pi i k n / N}$.
	
	\begin{theorem}[Discrete Convolution Theorem]
		For two $N$-point sequences $\bm{a}$ and $\bm{b}$, their circular convolution $\tilde{\bm{c}} = \bm{a} \circledast \bm{b}$ satisfies:
		\begin{equation}
			\mathcal{F}\{\tilde{\bm{c}}\} = \mathcal{F}\{\bm{a}\} \odot \mathcal{F}\{\bm{b}\},
		\end{equation}
		where $\odot$ denotes the element-wise (Hadamard) product. Equivalently,
		\begin{equation}
			\tilde{\bm{c}} = \mathcal{F}^{-1} \left\{ \mathcal{F}\{\bm{a}\} \odot \mathcal{F}\{\bm{b}\} \right\}.
		\end{equation}
	\end{theorem}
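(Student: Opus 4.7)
The plan is to verify the identity $\mathcal{F}\{\bm{a} \circledast \bm{b}\} = \mathcal{F}\{\bm{a}\} \odot \mathcal{F}\{\bm{b}\}$ componentwise by direct substitution, exploiting only the definitions of the DFT and circular convolution together with the multiplicative property of the complex exponential. The second, inverse-transform form then follows by applying $\mathcal{F}^{-1}$ to both sides, since $\mathcal{F}$ is a bijection on $\mathbb{C}^N$.

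Concretely, I would fix an index $j \in \{0, \dots, N-1\}$ and expand
\begin{equation*}
\mathcal{F}\{\tilde{\bm{c}}\}_j = \sum_{k=0}^{N-1} \tilde{c}_k\, e^{-2\pi i j k/N}
= \sum_{k=0}^{N-1} \sum_{m=0}^{N-1} a_m\, b_{(k-m)\bmod N}\, e^{-2\pi i j k/N}.
\end{equation*}
After interchanging the (finite) sums, I would perform the change of variables $n = (k-m)\bmod N$ for each fixed $m$. The key observation here is that as $k$ ranges over $\{0, \dots, N-1\}$, the shifted residue $n$ also ranges over $\{0, \dots, N-1\}$ exactly once, so the inner sum can be rewritten as a sum over $n$. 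Using $k \equiv m+n \pmod{N}$ and the periodicity $e^{-2\pi i j (m+n+N)/N} = e^{-2\pi i j (m+n)/N}$, the exponential factor splits multiplicatively as $e^{-2\pi i j m/N}\, e^{-2\pi i j n/N}$.

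The separated double sum then factors cleanly into a product of two independent one-dimensional DFTs:
\begin{equation*}
\mathcal{F}\{\tilde{\bm{c}}\}_j = \left(\sum_{m=0}^{N-1} a_m\, e^{-2\pi i j m/N}\right)\!\left(\sum_{n=0}^{N-1} b_n\, e^{-2\pi i j n/N}\right) = \mathcal{F}\{\bm{a}\}_j \cdot \mathcal{F}\{\bm{b}\}_j,
\end{equation*}
which is exactly the $j$-th entry of the Hadamard product. Since $j$ was arbitrary, this proves the first equality. Applying $\mathcal{F}^{-1}$ to both sides and using $\mathcal{F}^{-1}\mathcal{F} = \mathrm{Id}$ yields the second form.

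The only subtle point—and the single place where a careless argument could fail—is the change-of-variables step: one must justify that summing over $k$ and then reindexing modulo $N$ produces a genuine permutation of $\{0,\dots,N-1\}$, and that the periodicity of the roots of unity absorbs the modular reduction without introducing boundary terms. Everything else is mechanical rearrangement of finite sums, so I expect no analytic obstacles; in particular, no convergence issues arise because all sums are finite.
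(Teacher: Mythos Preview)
Your argument is correct and is the standard direct verification of the DFT convolution theorem; the reindexing step you flag is indeed the only non-mechanical point, and you handle it properly by noting that $k \mapsto (k-m)\bmod N$ is a permutation of $\{0,\dots,N-1\}$ and that the $N$-th roots of unity are $N$-periodic.

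By way of comparison, the paper does not actually supply a proof of this discrete statement in the body; it is asserted with a citation. The appendix titled ``Detailed Proof of the Convolution Theorem'' in fact proves the \emph{dual} Fourier-series statement---that the coefficients of a pointwise product $p(\theta)\mathcal{L}(\theta)$ are the convolution of the individual coefficient sequences (i.e., the content of the Spectral Bayes Update theorem)---rather than the DFT/circular-convolution identity you were asked about. Structurally the two arguments are mirror images: the paper uses the multiplicativity $e^{im\theta}e^{in\theta}=e^{i(m+n)\theta}$ together with the orthogonality integral $\int_{-\pi}^{\pi} e^{i(m+n-k)\theta}\,d\theta = 2\pi\,\delta_{m+n,k}$ to collapse a double sum into a single convolution, whereas you use the same multiplicativity of the DFT kernel together with the modular change of variables to factor a double sum into a product. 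Your proof is therefore the genuinely discrete counterpart that the paper's appendix gestures at but does not write out.
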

	
	This theorem is the cornerstone of our algorithm. It demonstrates that the expensive circular convolution in the original coefficient domain can be computed efficiently by performing three simpler operations: two DFTs, one cheap element-wise multiplication, and one inverse DFT \cite{cooley1965algorithm}.
	
	\subsection{Algorithm Description and Procedure}
	
	We now present the complete algorithm for a fast Bayesian update. The algorithm assumes that the prior and likelihood functions have already been projected onto a chosen orthonormal basis, and their coefficient vectors have been periodized and indexed from $0$ to $N-1$ for compatibility with standard FFT libraries \cite{cooley1965algorithm}.
	
	\begin{algorithm}
		\caption{Fast Bayesian Update via FFT}
		\label{alg:fft_bayes}
		\begin{algorithmic}[1]
			\REQUIRE {
				Prior coefficient vector $\bm{a} \in \mathbb{C}^N$ \\
				Likelihood coefficient vector $\bm{b} \in \mathbb{C}^N$ \\
				FFT and IFFT routines $\mathcal{F}$ and $\mathcal{F}^{-1}$
			}
			\ENSURE Normalized posterior coefficient vector $\bm{c} \in \mathbb{C}^N$
			\STATE $\bm{A} \gets \mathcal{F}(\bm{a})$ \COMMENT{Transform prior to frequency domain}
			\STATE $\bm{B} \gets \mathcal{F}(\bm{b})$ \COMMENT{Transform likelihood to frequency domain}
			\STATE $\tilde{\bm{C}} \gets \bm{A} \odot \bm{B}$ \COMMENT{Element-wise multiplication in frequency domain}
			\STATE $\tilde{\bm{c}} \gets \mathcal{F}^{-1}(\tilde{\bm{C}})$ \COMMENT{Transform result back to coefficient domain}
			\STATE $Z \gets \tilde{c}_0$ \COMMENT{The zeroth element is the evidence (normalizing constant)}
			\STATE $\bm{c} \gets \tilde{\bm{c}} / Z$ \COMMENT{Normalize the posterior coefficients}
			\RETURN $\bm{c}$
		\end{algorithmic}
	\end{algorithm}
	
	The output vector $\bm{c}$ contains the spectral coefficients of the normalized posterior distribution $p(\theta|\mathcal{D})$. The entire posterior density can then be reconstructed via the inverse spectral expansion: $p(\theta|\mathcal{D}) \approx \sum_{k=0}^{N-1} c_k \phi_k(\theta)$.
	
	\subsection{Computational Complexity Analysis}
	
	The computational cost of Algorithm \ref{alg:fft_bayes} is dominated by the FFT operations. A direct computation of the DFT has $\mathcal{O}(N^2)$ complexity. However, the Fast Fourier Transform (FFT) algorithm computes the exact same DFT in $\mathcal{O}(N \log N)$ operations \cite{cooley1965algorithm}.
	
	The overall complexity of our algorithm is therefore:
	\begin{itemize}
		\item \textbf{Step 1 \& 2:} Two FFTs: $\mathcal{O}(2 \cdot N \log N)$.
		\item \textbf{Step 3:} One element-wise vector multiplication: $\mathcal{O}(N)$.
		\item \textbf{Step 4:} One inverse FFT (IFFT): $\mathcal{O}(N \log N)$.
		\item \textbf{Step 5 \& 6:} Normalization: $\mathcal{O}(N)$.
	\end{itemize}
	The total complexity is thus $\mathcal{O}(N \log N)$.
	
	This represents a dramatic improvement over naive methods. A traditional approach to computing a posterior on a grid of $N$ points involves evaluating the prior and likelihood at each point ($\mathcal{O}(N)$) and then performing a numerical integration for normalization, which is at best $\mathcal{O}(N)$. However, for sequential inference where the posterior becomes the prior for the next update, our method maintains its $\mathcal{O}(N \log N)$ cost per update. In contrast, a naive grid-based method that convolves the prior with the likelihood would require $\mathcal{O}(N^2)$ operations per update. For Markov Chain Monte Carlo (MCMC) methods, the cost is typically $\mathcal{O}(N_{\text{samples}})$, where $N_{\text{samples}}$ must be large to ensure convergence and low variance, often far exceeding $N$ \cite{neal2011mcmc,robert1999monte}.
	
	The spectral FFT-based approach thus provides a \emph{substantial superpolynomial speedup} for problems satisfying the smoothness criteria outlined in Chapter 5, enabling fast, sequential Bayesian updates that were previously computationally prohibitive.
	
	\section{Discussion and Future Work}
	
	This work has established a new paradigm for Bayesian computation by recasting the inference problem in the language of harmonic analysis. We conclude by summarizing the theoretical framework, reflecting on its practical implications, and outlining promising avenues for future research.
	
	\subsection{Summary of the Theoretical Framework}
	
	We have developed a complete mathematical framework for fast Bayesian inference. The foundation lies in representing probability densities in a suitably chosen orthogonal basis \cite{boyd2001chebyshev,trefethen2000spectral}, such as Fourier, cosine, or polynomial bases. Within this spectral representation, we proved the core result that the Bayesian update rule—a pointwise multiplication followed by normalization—translates into a convolution of the spectral coefficients of the prior and the likelihood. To render this computationally feasible, we introduced a spectral truncation, which is highly accurate for smooth functions, and showed how this finite-dimensional approximation naturally leads to a circular convolution. Finally, by leveraging the Convolution Theorem and the Fast Fourier Transform \cite{cooley1965algorithm}, we derived an algorithm that computes the posterior update with a computational complexity of $\mathcal{O}(N \log N)$, a significant improvement over conventional methods \cite{neal2011mcmc,robert1999monte}.
	
	\subsection{Advantages and Limitations}
	
	The presented methodology offers several distinct advantages. Its foremost strength is \emph{speed}; the $\mathcal{O}(N \log N)$ complexity enables rapid, sequential updates that are intractable for naive or sampling-based approaches in many settings. Second, the algorithm is \emph{deterministic}; it produces the same output for a given input, free from the stochastic variation inherent in MCMC methods and eliminating concerns about chain convergence \cite{tierney1994markov}. Third, the approach is \emph{mathematically elegant}, providing a deep and unifying perspective that connects Bayesian inference to harmonic analysis and signal processing \cite{kay1993fundamentals}.
	
	However, these advantages are counterbalanced by specific limitations. The method's efficiency is \emph{critically dependent on the smoothness} of the prior and likelihood functions. Problems involving discontinuities or heavy-tailed distributions are not amenable to this approach without significant modification \cite{boyd2001chebyshev}. Furthermore, the framework in its basic form faces the \emph{curse of dimensionality} in high-dimensional spaces, as the number of coefficients required for the tensor-product basis scales exponentially with the dimension \cite{hastie2009elements}.
	
	\subsection{Future Research Directions}
	
	The work presented here opens up several exciting paths for future investigation.
	
	A primary direction is the development of \emph{adaptive basis selection algorithms}. An automated procedure for choosing the optimal basis and resolution $N$ based on the prior and likelihood would greatly enhance the method's robustness and usability, moving it from a bespoke tool to a general-purpose algorithm.
	
	Addressing the high-dimensional challenge is paramount. A promising strategy is to combine the spectral framework with \emph{tensor decomposition techniques}, such as the Tensor-Train format. This could allow for efficient representation and manipulation of high-dimensional distributions that possess low-rank structure, thereby mitigating the exponential scaling \cite{ghahramani2015probabilistic}.
	
	Another compelling application lies in \emph{sequential Bayesian filtering}. The speed of our algorithm makes it ideally suited for real-time filtering problems \cite{kay1993fundamentals}. It could serve as the core update mechanism in a novel particle flow or grid-based filter, where it would efficiently propagate the entire state distribution rather than a set of samples.
	
	Finally, exploring the integration with \emph{nonparametric Bayesian models} presents a significant opportunity. Developing spectral representations for distributions over functions, such as Gaussian processes \cite{rasmussen2006gaussian}, could lead to new, highly efficient inference algorithms for this important class of models.
	
	In conclusion, the harmonic representation of Bayesian updates provides a powerful and versatile framework. While subject to certain constraints, its superior computational performance and mathematical foundation offer a compelling alternative for a wide class of inference problems and a fertile ground for future algorithmic innovation.

	\bibliographystyle{unsrt}
	\bibliography{references}
	\appendix
	\section{Properties and Formulae of Important Orthogonal Bases}
	
	This appendix details the orthogonal bases referenced throughout the main text. For each basis, we define the domain, the orthonormal functions, and the explicit form of the inner product used to compute expansion coefficients \cite{boyd2001chebyshev,trefethen2000spectral}.
	
	\subsection{Fourier Basis (Periodic Domain)}
	\begin{itemize}
		\item \emph{Domain:} $\theta \in [-\pi, \pi]$ (or any interval with periodic boundary conditions).
		\item \emph{Orthonormal Functions:}
		\[
		\phi_k(\theta) = \frac{1}{\sqrt{2\pi}} e^{ik\theta}, \quad k \in \mathbb{Z}.
		\]
		\item \emph{Inner Product:}
		\[
		\langle f, g \rangle = \int_{-\pi}^{\pi} f(\theta) \overline{g(\theta)}  d\theta.
		\]
		\item \emph{Coefficient Calculation:}
		\[
		a_k = \langle f, \phi_k \rangle = \frac{1}{\sqrt{2\pi}} \int_{-\pi}^{\pi} f(\theta) e^{-ik\theta}  d\theta.
		\]
	\end{itemize}
	
	\subsection{Cosine Basis (Finite Non-Periodic Domain)}
	\begin{itemize}
		\item \emph{Domain:} $\theta \in [0, L]$.
		\item \emph{Orthonormal Functions:}
		\[
		\phi_k(\theta) = \sqrt{\frac{2}{L}} \cos\left(\frac{\pi k \theta}{L}\right), \quad k = 0, 1, 2, \dots.
		\]
		\item \emph{Inner Product:}
		\[
		\langle f, g \rangle = \int_{0}^{L} f(\theta) g(\theta)  d\theta.
		\]
		\item \emph{Note:} This basis implicitly assumes Neumann boundary conditions (zero derivative at the boundaries).
	\end{itemize}
	
	\subsection{Hermite Functions (Infinite Domain)}
	\begin{itemize}
		\item \emph{Domain:} $\theta \in (-\infty, \infty)$.
		\item \emph{Orthonormal Functions:}
		\[
		\psi_k(\theta) = (2^k k! \sqrt{\pi})^{-1/2}  H_k(\theta) e^{-\theta^2/2}, \quad k = 0, 1, 2, \dots,
		\]
		where $H_k(\theta)$ are the Hermite polynomials, defined by the recurrence:
		\[
		H_0(\theta) = 1, \quad H_1(\theta) = 2\theta, \quad H_{k+1}(\theta) = 2\theta H_k(\theta) - 2k H_{k-1}(\theta).
		\]
		\item \emph{Inner Product:}
		\[
		\langle f, g \rangle = \int_{-\infty}^{\infty} f(\theta) g(\theta)  d\theta.
		\]
		\item \emph{Note:} The Gaussian weight $e^{-\theta^2/2}$ is built into the basis functions, making them ideal for functions that decay similarly.
	\end{itemize}
	
	\section{Detailed Proof of the Convolution Theorem}
	
	The Convolution Theorem stated in Section 2.3 is a cornerstone of our method. Here, we provide a detailed proof for the case of the Fourier basis on $[-\pi, \pi]$ \cite{cooley1965algorithm,kay1993fundamentals}.
	
	\begin{proof}
		Let $p(\theta) = \sum_{m} a_m \phi_m(\theta)$ and $\mathcal{L}(\theta) = \sum_{n} b_n \phi_n(\theta)$, with $\phi_k(\theta) = e^{ik\theta}/\sqrt{2\pi}$. Their product is:
		\[
		\tilde{p}(\theta) = p(\theta)\mathcal{L}(\theta) = \left( \sum_{m} a_m \phi_m(\theta) \right) \left( \sum_{n} b_n \phi_n(\theta) \right).
		\]
		Substituting the definition of $\phi_k$:
		\[
		\tilde{p}(\theta) = \frac{1}{2\pi} \sum_{m} \sum_{n} a_m b_n e^{i(m+n)\theta}.
		\]
		The coefficient $\tilde{c}_k$ for the unnormalized posterior is given by:
		\[
		\tilde{c}_k = \langle \tilde{p}, \phi_k \rangle = \int_{-\pi}^{\pi} \tilde{p}(\theta) \overline{\phi_k(\theta)}  d\theta = \frac{1}{2\pi} \int_{-\pi}^{\pi} \left( \sum_{m} \sum_{n} a_m b_n e^{i(m+n)\theta} \right) e^{-ik\theta}  d\theta.
		\]
		Interchanging summation and integration:
		\[
		\tilde{c}_k = \frac{1}{2\pi} \sum_{m} \sum_{n} a_m b_n \int_{-\pi}^{\pi} e^{i(m+n-k)\theta}  d\theta.
		\]
		The integral is zero unless $m+n-k = 0$, in which case it equals $2\pi$. This is the orthogonality relation:
		\[
		\int_{-\pi}^{\pi} e^{i(m+n-k)\theta}  d\theta = 2\pi \, \delta_{m+n, k}.
		\]
		Therefore, the double sum collapses to a single sum:
		\[
		\tilde{c}_k = \frac{1}{2\pi} \sum_{m} \sum_{n} a_m b_n (2\pi \, \delta_{m+n, k}) = \sum_{m} a_m \left( \sum_{n} b_n \delta_{m+n, k} \right) = \sum_{m} a_m b_{k-m},
		\]
		which is the desired convolution, $(a * b)_k$.
	\end{proof}
	
	\section{Mathematical Theorems on Coefficient Decay Rates}
	
	This appendix states precise mathematical theorems connecting the analytic properties of a function to the decay rate of its spectral coefficients \cite{boyd2001chebyshev,trefethen2000spectral}.
	
	\subsection{The Riemann-Lebesgue Lemma and Basic Decay}
	\begin{theorem}[Riemann-Lebesgue Lemma]
		If $f$ is integrable ($f \in L^1([-\pi, \pi])$), then its Fourier coefficients $a_k$ satisfy
		\[
		\lim_{|k| \to \infty} a_k = 0.
		\]
	\end{theorem}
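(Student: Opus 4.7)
The plan is to use the classical density/approximation strategy: prove the result first on a dense subclass of $L^1([-\pi,\pi])$ where decay is immediate, then extend to general $f$ via a uniform bound on the Fourier coefficients.

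First, I would record the trivial uniform estimate. From the definition $a_k = \frac{1}{\sqrt{2\pi}}\int_{-\pi}^{\pi} f(\theta) e^{-ik\theta}\, d\theta$ together with the triangle inequality and $|e^{-ik\theta}|=1$, one gets
\begin{equation}
|a_k| \;\le\; \frac{1}{\sqrt{2\pi}}\,\|f\|_{L^1}\qquad\text{for every }k\in\mathbb{Z}.
\end{equation}
This says the linear map $f\mapsto (a_k)$ is bounded from $L^1$ into $\ell^\infty$, uniformly in $k$.

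Second, I would verify the claim on a convenient dense subclass. The cleanest choice is $C^1([-\pi,\pi])$ with matching endpoint values (so that no boundary terms appear), or, even more directly, the class of trigonometric polynomials. For $g\in C^1$ with $g(-\pi)=g(\pi)$, integration by parts gives
\begin{equation}
a_k^g \;=\; \frac{1}{\sqrt{2\pi}}\int_{-\pi}^{\pi} g(\theta) e^{-ik\theta}\, d\theta \;=\; \frac{1}{ik\sqrt{2\pi}}\int_{-\pi}^{\pi} g'(\theta) e^{-ik\theta}\, d\theta,
\end{equation}
so $|a_k^g|\le \|g'\|_{L^1}/(|k|\sqrt{2\pi}) \to 0$ as $|k|\to\infty$. (For trigonometric polynomials the statement is even more immediate: only finitely many coefficients are nonzero.)

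Third, I would combine the two via an $\varepsilon/2$ argument. Given $f\in L^1$ and $\varepsilon>0$, invoke density of $C^1$ (equivalently, of $C^\infty_c$ or of trigonometric polynomials) in $L^1([-\pi,\pi])$ to pick $g$ with $\|f-g\|_{L^1} < \varepsilon\sqrt{2\pi}/2$. By the uniform bound applied to $f-g$, its coefficients satisfy $|a_k - a_k^g|<\varepsilon/2$ for all $k$. By the second step, there exists $K$ such that $|a_k^g|<\varepsilon/2$ for $|k|>K$. The triangle inequality then yields $|a_k|<\varepsilon$ for $|k|>K$, which is the desired conclusion.

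The main obstacle is a conceptual rather than computational one: it is precisely the density of smooth (or polynomial) functions in $L^1$ that carries the weight of the proof. This is a standard measure-theoretic fact (arising from the density of simple functions, together with approximation of indicators of measurable sets by indicators of finite unions of intervals and then by continuous bump functions), and I would simply invoke it. Everything else is a one-line integration by parts plus a triangle inequality.
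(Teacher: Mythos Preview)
The paper states the Riemann--Lebesgue Lemma in its appendix as a classical result but does not supply a proof, so there is no in-paper argument to compare against. Your proposal is the standard and correct proof: the uniform bound $|a_k|\le (2\pi)^{-1/2}\|f\|_{L^1}$, the integration-by-parts step on a dense smooth subclass, and the $\varepsilon/2$ splicing are all sound. One small remark worth making explicit is that the density you invoke does hold in the required form---for instance $C_c^\infty((-\pi,\pi))$ is dense in $L^1([-\pi,\pi])$ and such functions automatically satisfy the endpoint condition $g(-\pi)=g(\pi)=0$, so no extra work is needed to justify vanishing boundary terms. With that clarification the argument is complete.
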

	This guarantees decay for any reasonable function but does not specify a rate.
	
	\subsection{Decay via Smoothness (Fourier Series)}
	\begin{theorem}
		If $f$ is $p$-times continuously differentiable ($f \in C^p([-\pi, \pi])$) and $f^{(p)}$ is of bounded variation, then the Fourier coefficients of $f$ satisfy
		\[
		|a_k| \le \frac{C}{|k|^{p+1}} \quad \text{for } |k| > 0,
		\]
		where the constant $C$ depends on the total variation of $f^{(p)}$.
	\end{theorem}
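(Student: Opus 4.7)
The plan is to obtain the bound by iterated integration by parts applied to the defining integral for $a_k$, exploiting periodicity to discard boundary terms, and then to extract the final factor of $1/|k|$ from the bounded-variation hypothesis via a Riemann--Stieltjes argument.

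First I would write
\[
a_k \;=\; \frac{1}{\sqrt{2\pi}}\int_{-\pi}^{\pi} f(\theta)\,e^{-ik\theta}\,d\theta
\]
and integrate by parts once, noting that the antiderivative of $e^{-ik\theta}$ is $e^{-ik\theta}/(-ik)$. Since $f \in C^p$ and Fourier series on $[-\pi,\pi]$ implicitly treat $f$ as the restriction of a $2\pi$-periodic function, all derivatives $f^{(j)}$ up to order $p-1$ agree at $\pm\pi$, so the boundary terms vanish. Iterating this step $p$ times converts the coefficient into
\[
a_k \;=\; \frac{1}{(ik)^{p}}\cdot \frac{1}{\sqrt{2\pi}}\int_{-\pi}^{\pi} f^{(p)}(\theta)\,e^{-ik\theta}\,d\theta,
\]
so we have reduced the problem to showing that the Fourier coefficient of $f^{(p)}$ decays like $1/|k|$ with a constant controlled by its total variation.

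For the last factor I would invoke the fact that $f^{(p)}$ is of bounded variation, which allows a Riemann--Stieltjes integration by parts:
\[
\int_{-\pi}^{\pi} f^{(p)}(\theta)\,e^{-ik\theta}\,d\theta \;=\; \Bigl[ f^{(p)}(\theta)\,\tfrac{e^{-ik\theta}}{-ik}\Bigr]_{-\pi}^{\pi} \;+\; \frac{1}{ik}\int_{-\pi}^{\pi} e^{-ik\theta}\,df^{(p)}(\theta).
\]
Periodicity of $f^{(p)}$ (inherited from $f$) kills the boundary term, and the remaining Stieltjes integral is bounded in absolute value by $\mathrm{TV}(f^{(p)})$ because $|e^{-ik\theta}|=1$. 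Combining this with the earlier factor of $(ik)^{-p}$ yields
\[
|a_k| \;\le\; \frac{\mathrm{TV}(f^{(p)})}{\sqrt{2\pi}\,|k|^{p+1}} \;=\; \frac{C}{|k|^{p+1}},
\]
as claimed.

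The main obstacle, and the only place subtlety is required, is the treatment of the $(p+1)$-st integration by parts: $f^{(p+1)}$ need not exist as an ordinary function, so I cannot simply continue the classical integration-by-parts chain. The bounded-variation hypothesis is the right substitute because it provides a signed measure $df^{(p)}$ against which $e^{-ik\theta}$ can be integrated, and it is exactly the quantity that controls the constant $C$. A second small point is to make the periodicity assumption explicit, since without it the boundary terms in the intermediate integrations would contribute $O(1/|k|)$ tails that degrade the rate; in the Fourier-series setting this is the natural convention, and I would state it at the outset of the proof.
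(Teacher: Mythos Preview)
The paper does not actually prove this theorem; it is stated in the appendix as a classical result with citations to standard spectral-methods texts, followed only by the remark that ``smoother functions have faster decaying coefficients.'' There is therefore no paper proof to compare against.

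Your argument is the standard textbook proof and is correct. Iterated integration by parts against $e^{-ik\theta}$ produces the factor $(ik)^{-p}$, and the bounded-variation hypothesis on $f^{(p)}$ permits one further Riemann--Stieltjes integration by parts, giving the final factor of $|k|^{-1}$ with constant controlled by $\mathrm{TV}(f^{(p)})$. You have also correctly isolated the only real subtlety: the boundary terms vanish only if $f$ and its derivatives through order $p$ match at $\pm\pi$, i.e.\ if $f$ is genuinely the restriction of a $2\pi$-periodic $C^p$ function. The theorem as stated in the paper is silent on this, but in the Fourier-series context it is the intended reading, and your plan to make it explicit at the outset is exactly right. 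Without periodicity the boundary contributions from the first integration by parts already limit the decay to $O(|k|^{-1})$ regardless of smoothness, as you note.
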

	This theorem formalizes the intuitive notion that smoother functions have faster decaying coefficients.
	
	\subsection{Exponential Decay and Analyticity (Paley-Wiener Theory)}
	The Paley-Wiener theorems characterize functions with exponentially decaying Fourier transforms. A classical form is as follows \cite{kay1993fundamentals}.
	
	\begin{theorem}[Paley-Wiener]
		A function $f \in L^2(\mathbb{R})$ is the restriction to the real axis of an entire function of exponential type $\gamma$ (i.e., $|f(z)| \le A e^{\gamma |z|}$ for some $A>0$) and satisfies $|f(x)| \le B e^{-a|x|}$ for some $a>0$ if and only if its Fourier transform $\hat{f}(\omega)$ is supported on the interval $[-\gamma, \gamma]$ (i.e., it is bandlimited).
	\end{theorem}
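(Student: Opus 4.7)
The plan is to establish the two implications of the Paley--Wiener theorem separately, using the standard synthesis/analysis duality between compactly supported $\hat{f}$ and entire extensions of $f$. Since the condition combines bandlimiting of $\hat{f}$ with exponential decay of $f$, I will treat the bandlimited characterization as the principal content and remark on the decay hypothesis at the end.

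For the ``sufficiency'' direction, assume $\hat{f}$ is supported in $[-\gamma,\gamma]$ and $\hat{f}\in L^2$. I would simply define
\begin{equation*}
F(z) \;=\; \frac{1}{2\pi}\int_{-\gamma}^{\gamma} \hat{f}(\omega)\, e^{iz\omega}\, d\omega, \qquad z\in\mathbb{C},
\end{equation*}
and check three items: (i) $F$ is entire by Morera's theorem plus Fubini, since the integrand is entire in $z$ with uniform bounds on compact sets; (ii) for all $z\in\mathbb{C}$ one has the exponential-type estimate
\begin{equation*}
|F(z)| \;\le\; \frac{1}{2\pi}\,\|\hat{f}\|_{L^1([-\gamma,\gamma])}\, e^{\gamma|\mathrm{Im}\,z|} \;\le\; A\, e^{\gamma|z|},
\end{equation*}
using Cauchy--Schwarz to bound the $L^1$ norm; (iii) $F|_{\mathbb{R}}=f$ by the Fourier inversion theorem. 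This half is essentially a computation.

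The harder ``necessity'' direction is where the main obstacle lies. Given an entire $f$ of exponential type $\gamma$ whose restriction to $\mathbb{R}$ lies in $L^2$, I need to show $\hat{f}(\omega)=0$ for $|\omega|>\gamma$. The plan is a contour-shift argument: for $\omega>\gamma$, replace the real line in $\hat{f}(\omega)=\int_{\mathbb{R}} f(x)\,e^{-i\omega x}\,dx$ by the horizontal line $\{x - iR : x\in\mathbb{R}\}$ for $R>0$. On this line the integrand is bounded by $|f(x-iR)|\,e^{-\omega R}\le A\, e^{\gamma R}\, e^{-\omega R} = A\, e^{-(\omega-\gamma)R}$ pointwise, which tends to $0$ as $R\to\infty$. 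For $\omega<-\gamma$ the contour is shifted upward by $+iR$, giving the symmetric bound. The technical crux is justifying the deformation, which requires (a) controlling $f$ on vertical segments $\{\pm M + iy : -R\le y\le 0\}$ so the ``side'' contributions vanish as $M\to\infty$, and (b) ensuring the integrals on the shifted horizontal line actually converge. This is where the Phragm\'en--Lindel\"of principle enters: the hypotheses give exponential-type growth $O(e^{\gamma|z|})$ plus $L^2$ control on the real axis, and one upgrades this via Phragm\'en--Lindel\"of on the upper/lower half-planes to obtain $L^2$ control on every horizontal line, together with decay sufficient to kill the side contributions along a sequence $M_j\to\infty$.

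Finally, the exponential-decay clause $|f(x)|\le B e^{-a|x|}$ stated in the theorem corresponds to $\hat{f}$ being the boundary value of a function analytic in the strip $|\mathrm{Im}\,\omega|<a$; I would note this as a companion Paley--Wiener statement derived by the same contour-shift machinery applied to $\hat{f}$ rather than to $f$, and indicate that it is \emph{independent} of the bandlimiting conclusion. The expected main difficulty throughout is the rigorous contour-deformation / Phragm\'en--Lindel\"of step, since the pointwise exponential-type bound by itself is not strong enough to make the side contributions vanish without additional $L^2$ or growth control on horizontal slices.
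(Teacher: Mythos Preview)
The paper does not actually prove this theorem; it is stated in Appendix~C.3 as a classical result with a citation and no argument is given, so there is no ``paper's own proof'' to compare against. Your outline is the standard route: Fourier inversion over a compact interval to get the entire extension and exponential-type bound in one direction, and a contour shift justified by Phragm\'en--Lindel\"of to show $\hat f$ vanishes outside $[-\gamma,\gamma]$ in the other. That is correct and is exactly what one finds in the references the paper cites.

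You are also right to be uneasy about the exponential-decay clause $|f(x)|\le B e^{-a|x|}$. As stated, the equivalence is false in the ``if'' direction: $\hat f=\chi_{[-\gamma,\gamma]}$ is bandlimited, but the corresponding $f(x)=\sin(\gamma x)/(\pi x)$ decays only like $1/|x|$, not exponentially. The classical Paley--Wiener theorem for $L^2$ omits that clause entirely; your observation that it belongs to a separate Paley--Wiener-type statement (analyticity of $\hat f$ in a strip $\Leftrightarrow$ exponential decay of $f$) is the correct diagnosis. So your proposal is sound, and in fact more careful than the paper's own formulation.
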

	
	A more directly applicable corollary for our context is:
	
	\begin{theorem}
		If a function $f$ defined on $\mathbb{R}$ can be extended to an analytic function in the strip $|\operatorname{Im}(z)| < a$ for some $a > 0$, and if $f(x+iy)$ is integrable for each fixed $y$, then its Fourier transform decays exponentially:
		\[
		|\hat{f}(\omega)| \le C e^{-a|\omega|} \quad \text{for all } \omega \in \mathbb{R}.
		\]
		An analogous result holds for Fourier series, linking the analyticity of a periodic function in a complex neighborhood of the real axis to the exponential decay of its Fourier coefficients.
	\end{theorem}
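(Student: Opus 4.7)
The plan is to prove the real-line statement first by a contour-shifting argument based on Cauchy's theorem, and then indicate how to adapt the same idea to the periodic setting using the strip of analyticity. Start from the definition
\[
\hat{f}(\omega) = \int_{-\infty}^{\infty} f(x) e^{-i\omega x}\, dx,
\]
and consider, for fixed $\omega > 0$ and any $0 < b < a$, the rectangular contour $\Gamma_R$ in the complex plane with vertices $\pm R$ and $\pm R - ib$. Since $f$ is analytic in the strip $|\operatorname{Im}(z)| < a$, the integrand $f(z) e^{-i\omega z}$ is analytic inside $\Gamma_R$, so Cauchy's theorem gives $\oint_{\Gamma_R} f(z) e^{-i\omega z}\, dz = 0$.

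The next step is to let $R \to \infty$ and show that the two vertical sides of $\Gamma_R$ contribute nothing in the limit. This is where the integrability hypothesis $f(\cdot + iy) \in L^1$ for each $y$ is used: a standard consequence (via Fubini and analyticity on horizontal strips) is that $f(x+iy) \to 0$ as $|x| \to \infty$ uniformly on compact subsets of $y \in (-a, a)$, or one can pick a sequence $R_n \to \infty$ along which the vertical integrals vanish. Granting this, the identity reduces to
\[
\hat{f}(\omega) = \int_{-\infty}^{\infty} f(x - ib) e^{-i\omega(x - ib)}\, dx = e^{-\omega b} \int_{-\infty}^{\infty} f(x - ib) e^{-i\omega x}\, dx,
\]
and the $L^1$ bound on the horizontal line yields $|\hat{f}(\omega)| \le e^{-\omega b} \|f(\cdot - ib)\|_{L^1}$. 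For $\omega < 0$ I would shift the contour upward to $\operatorname{Im}(z) = +b$ instead, obtaining $|\hat{f}(\omega)| \le e^{\omega b} \|f(\cdot + ib)\|_{L^1}$. Taking $b \nearrow a$ (and choosing $C$ to accommodate the supremum of $L^1$ norms across horizontal lines, with a small cushion) delivers $|\hat{f}(\omega)| \le C e^{-a|\omega|}$.

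For the Fourier series analogue, the argument is the same in spirit but cleaner: if the periodic $f$ extends analytically to the strip $|\operatorname{Im}(z)| < a$, then the contour defining $a_k = \frac{1}{2\pi}\int_{-\pi}^{\pi} f(\theta) e^{-ik\theta}\, d\theta$ can be shifted to $\theta - ib$ for $k > 0$ (or $\theta + ib$ for $k < 0$); the vertical contributions at $\pm \pi$ cancel by periodicity, so no decay-at-infinity argument is needed, and one reads off $|a_k| \le C e^{-a|k|}$ directly.

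The main obstacle I expect is the justification that the vertical sides of $\Gamma_R$ vanish in the non-periodic case. The hypothesis as stated only gives $L^1$ control along each horizontal line, not pointwise decay at $\pm\infty$, so I would either strengthen the hypothesis slightly (as is customary in Paley--Wiener statements) or invoke a Fubini--plus--dominated--convergence argument to extract a subsequence $R_n \to \infty$ along which the vertical integrals go to zero; the resulting bound on $|\hat f(\omega)|$ is unaffected. The periodic case avoids this difficulty entirely, which is why it is the more directly usable corollary for the truncation-error analysis earlier in the paper.
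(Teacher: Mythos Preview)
The paper does not actually prove this theorem; it is stated in Appendix~C as a classical Paley--Wiener result with a reference, so there is no in-paper argument to compare against. Your contour-shifting proof is precisely the standard textbook argument for this statement, and the periodic adaptation you sketch (where periodicity makes the vertical contributions at $\pm\pi$ cancel exactly, so no decay-at-infinity argument is needed) is likewise the canonical one.

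One point worth tightening: the step ``taking $b \nearrow a$ \dots\ delivers $|\hat f(\omega)| \le C e^{-a|\omega|}$'' is not automatic, because $\|f(\cdot - ib)\|_{L^1}$ may blow up as $b \to a^{-}$. What the contour shift yields directly is $|\hat f(\omega)| \le C_b\, e^{-b|\omega|}$ for each fixed $b < a$, with $C_b$ depending on $b$; reaching the endpoint exponent $a$ with a single constant $C$ requires either analyticity and integrability on the closed strip $|\operatorname{Im} z| \le a$, or a uniform bound on the $L^1$ norms along horizontal lines. Your ``small cushion'' remark suggests you are aware of this, but it should be made explicit rather than absorbed into the constant. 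You are also right to flag the vanishing of the vertical sides as the genuinely delicate step in the non-periodic case; the subsequence-extraction route you propose is the usual workaround under the bare $L^1$ hypothesis, and since the paper's own statement is informal on this point, either that or a mild strengthening of the hypothesis is acceptable here.
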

	These theorems provide the rigorous foundation for the claims made in Section 5.1 regarding exponential decay for analytic functions.
\end{document}